\DeclareMathOperator*{\argmax}{arg\,max}
\DeclareMathOperator*{\argmin}{arg\,min}
\DeclareMathOperator*{\minimize}{minimize}
\DeclareMathOperator*{\maximize}{maximize}
\DeclareMathOperator*{\st}{subject~to}
\DeclareMathOperator*{\optn}{(optionally)}
\newtheorem{proposition}{Proposition}
\g@addto@macro\normalsize{%
\setlength\abovedisplayskip{3pt}
\setlength\belowdisplayskip{3pt}
\setlength\abovedisplayshortskip{3pt}
\setlength\belowdisplayshortskip{3pt}
}
\title{\LARGE \bf
Team Orienteering Coverage Planning with Uncertain Reward
}
\author{Bo Liu$^{1}$, Xuesu Xiao$^{1}$ and Peter Stone$^{1,2}$
\thanks{$^{1}$Xuesu Xiao, Bo Liu, and Peter Stone are with Department of Computer Science, University of Texas at Austin, Austin, TX 78712 {\tt\scriptsize \{bliu, xiao, pstone\}@cs.utexas.edu}. $^{2}$Peter Stone is also affiliated with Sony AI .}}
\begin{document}

\maketitle
\thispagestyle{empty}
\pagestyle{empty}

\begin{abstract}
Many municipalities and large organizations have fleets of vehicles that need to be coordinated for tasks such as garbage collection or infrastructure inspection.  Motivated by this need, this paper focuses on the common subproblem in which a team of vehicles needs to plan coordinated routes to patrol an area over iterations while minimizing temporally and spatially dependent costs. In particular, at a specific location (e.g., a vertex on a graph), we assume the cost grows linearly in expectation with an unknown rate, and the cost is reset to zero whenever any vehicle visits the vertex (representing the robot ``servicing" the vertex).
We formulate this problem in graph terminology and call it Team Orienteering Coverage Planning with Uncertain Reward (TOCPUR). We propose to solve TOCPUR by simultaneously estimating the accumulated cost at every vertex on the graph and solving a novel variant of the Team Orienteering Problem (TOP) iteratively, which we call the Team Orienteering Coverage Problem (TOCP). We provide the first mixed integer programming formulation for the TOCP, as a significant adaptation of the original TOP. We introduce a new benchmark consisting of hundreds of randomly generated graphs for comparing different methods. We show the proposed solution outperforms both the exact TOP solution and a greedy algorithm. In addition, we provide a demo of our method on a team of three physical robots in a real-world environment. The code is publicly available at \url{https://github.com/Cranial-XIX/TOCPUR.git}.
\end{abstract}

\section{Introduction}
Mobile agent fleets are now being used for many purposes in our daily life, such as a team of mobile robots delivering food~\cite{sun2019design}, a school bus fleet picking up students, or a garbage truck fleet collecting garbage.

In many such situations, visiting a particular location results in some benefit (e.g. collecting piled up garbage), which we model as a reward. The overall objective is therefore to collect as much reward as possible, while ensuring that each vehicle's travel time stays within some budget. This problem can be formulated as the Team Orienteering Problem (TOP)~\cite{chao1996team}. However, TOP assumes the reward at each location is a known constant before being collected and set to zero. This formulation does not suit problems in which the reward can accumulate over time. For example, consider a garbage truck fleet collecting garbage in a city. The amount of garbage in general grows over time and it becomes much more beneficial to visit a location that has not been visited for a long time. The expected garbage growth rate at different locations might be different and unknown to the agents beforehand. But whenever an agent visits a location and collects the garbage, it can update its estimation of the growth rate at that location. In addition to TOP assuming a known constant reward, in its typical formulation each location can only be visited once, which again significantly limits its application.

\begin{figure}[t]
    \centering
    \includegraphics[width=\columnwidth]{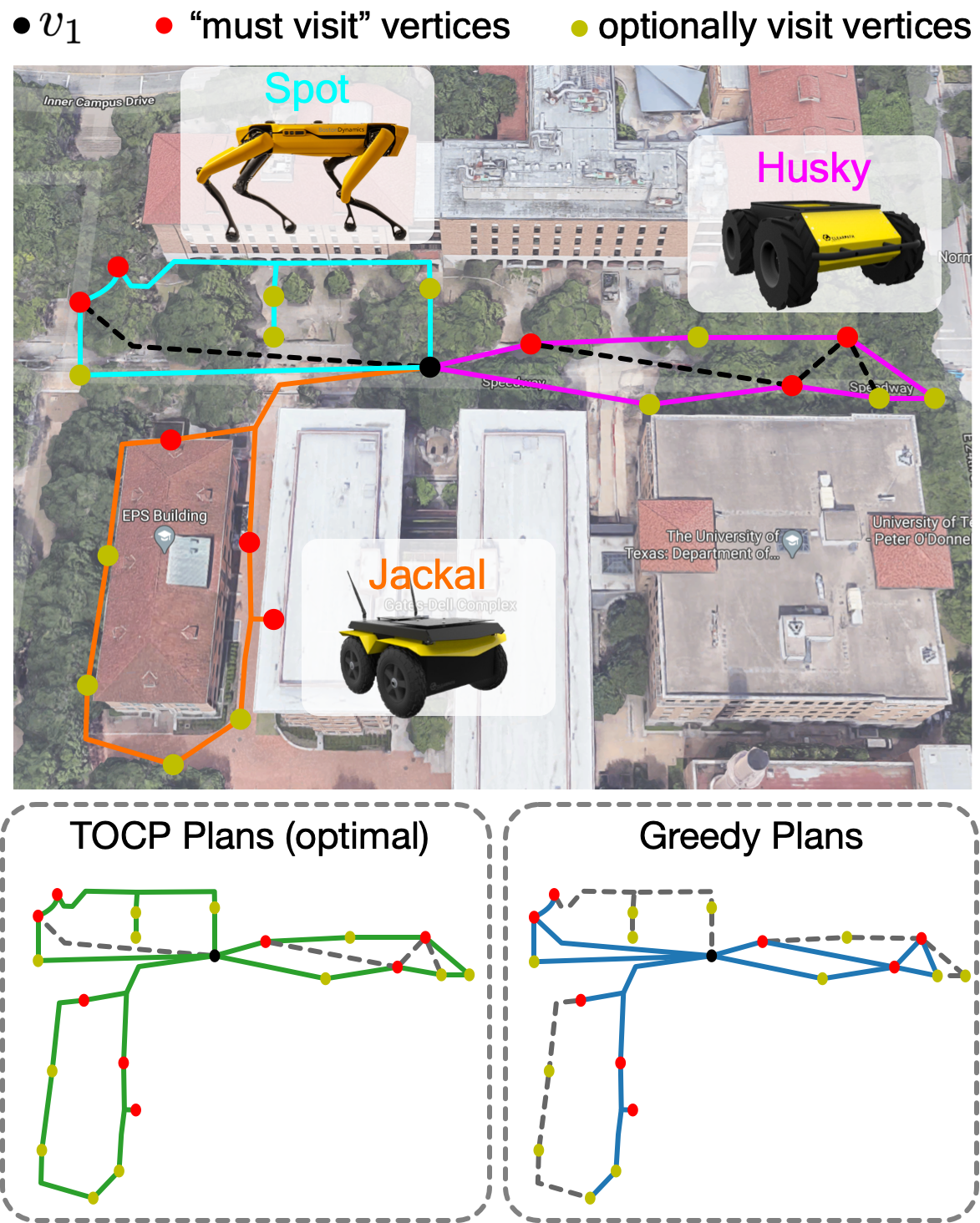}
    \caption{Real-World Demonstration: From the same black vertex ($v_1$), a fleet of three robots is tasked with visiting all red vertices and as many yellow vertices as possible. Within the same budget, TOCP finds the optimal plan that covers all red and yellow vertices \textbf{(lower left)}, while the greedy baseline misses many optional yellow vertices and does a lot of backtracking \textbf{(lower right)}. }
    \label{fig:gdc}
    \vspace{-20pt}
\end{figure}
As a result, in this paper, we introduce a novel coverage planning problem, called Team Orienteering Coverage Planning with Uncertain Reward (TOCPUR).
In TOCPUR, a team of mobile agents is tasked to patrol an area over multiple iterations. The goal is to reduce the time- and place-dependent costs (i.e., negative rewards). In particular, we assume the cost grows between iterations and stays constant within an iteration, and each vertex can be visited multiple times in an iteration but the cost is only reduced once (e.g., the garbage is collected during the first visit on that day where each day is an iteration). Optionally, we allow users to specify a subset of locations that the fleet has to visit. This option is useful when the fleet of agents has a primary task (e.g. routine check at certain locations) and some secondary tasks (e.g. collecting as much garbage as possible).

We solve TOCPUR by simultaneously estimating the unknown and growing costs over the area and solving a novel variant of the TOP, called the Team Orienteering Coverage Problem (TOCP), which allows multiple visits to the same nodes.
In this paper, we refer to TOCP and TOP plans as the solutions to their corresponding problems. We introduce a benchmark of hundreds of randomly generated graphs and show improved performance using the proposed method compared to both the TOP solution and a greedy algorithm. We also demonstrate the proposed planner working on a team of three physical robots in a real-world environment (Fig. \ref{fig:gdc}). 

\section{Related Work}
In this section, we briefly review prior literature in orienteering problems and coverage planning.

\paragraph{Orienteering Problem}
The proposed TOCPUR problem is closely related to the team orienteering problem (TOP)~\cite{chao1996team}, which is a multi-tour extension of the orienteering problem (OP)~\cite{golden1987orienteering}. The OP originates from the Travelling Salesman Problem and is heavily studied in the optimization community. In the standard OP, an agent needs to plan a path under a fixed length constraint that collects as much reward as possible as it traverses the vertices along that path. The TOP is then a multi-tour extension of the OP where a team of agents plan consecutively, which is equivalent to a single agent planning multiple times. The major difference between TOCPUR and TOP is that in TOCPUR, we allow each vertex to be visited multiple times, which better resembles many real-world scenarios but makes the problem harder. As a result, the planned route becomes a walk instead of a path (e.g. a walk without loops) for each agent (See Fig.~\ref{fig:intro}).
\begin{figure}
    \centering
    \includegraphics[width=\columnwidth]{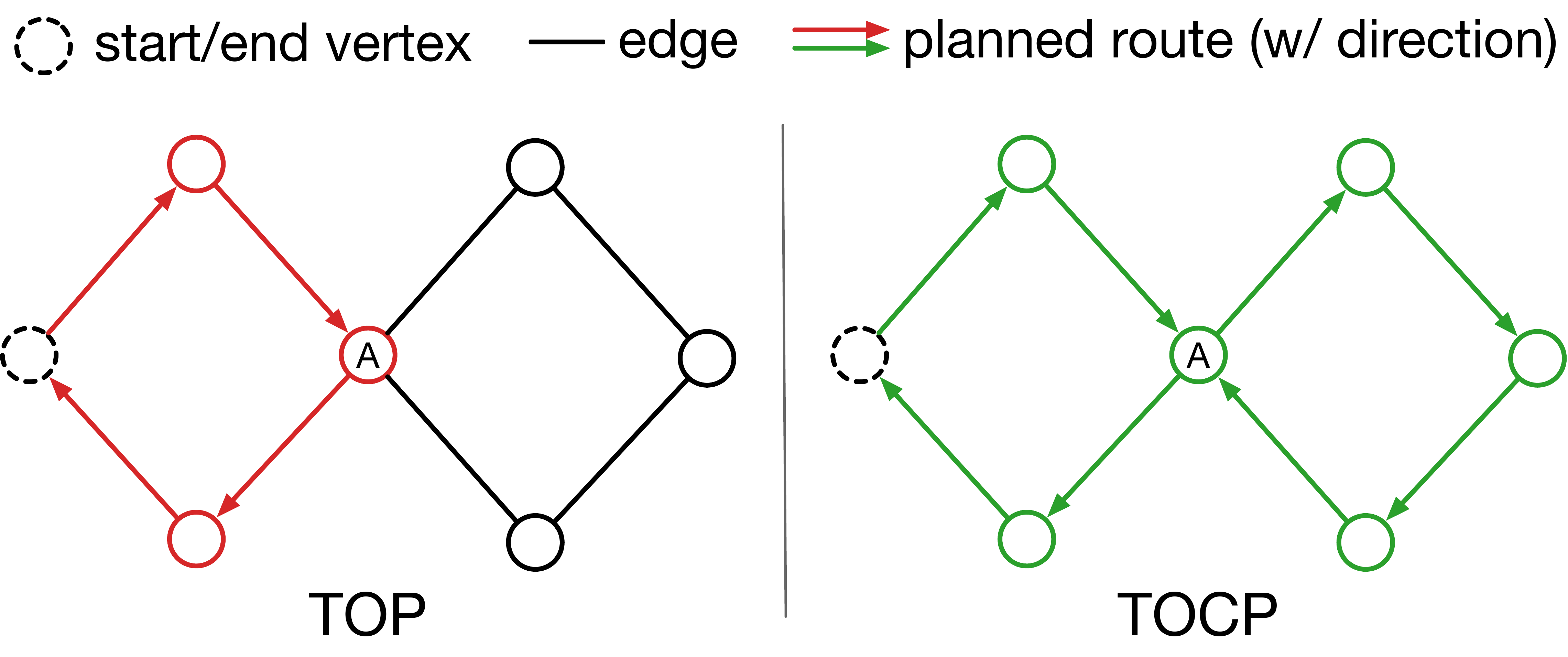}
    \caption{The TOP \textbf{(left)} and the TOCP \textbf{(right)} plans on the same graph instance. In the standard TOP formulation, an agent cannot traverse a vertex twice. Hence in TOP, it is impossible to visit the nodes on the right of A in a closed route that starts/ends at the leftmost vertex.}
    \label{fig:intro}
    \vspace{-15pt}
\end{figure}
While this modification seems to be a small change, the core constraints in the formulation of the standard TOP problem can no longer be used. This modification is also the key contribution of our formulation of the TOCPUR problem. In addition, TOCPUR models cumulative reward over time where at each time step, the reward is sampled identically and independently from a fixed unknown distribution. This setup is similar to the OP with stochastic profits (OPSP)~\cite{ilhan2008orienteering}. However, in OPSP, the objective is to maximize the probability that the total collected profit will be greater than a predefined target value. In addition, OPSP does not allow for reward/profit to grow over time.

Besides TOP and OPSP, there exists a big family of variants of the OP. For instance, prior research has considered the OP with time window (OPTW)~\cite{labadie2012team} where the agent can only visit a vertex within a pre-specified time window. Generalized OP (GOP) extends OP such that the reward at each vertex is a non-linear function with respect to a set of attributes. Multi-agent OP (MAOP)~\cite{chen2014multi} considers a competitive game among multiple agents trying to solve the OP individually. More recently, multi-visit TOP (MVTOP)~\cite{hanafi2020multi} has been proposed where each vertex needs to be served by multiple agents with different skills in a certain order. In general, different combinations of having multiple tours, multiple agents, different time windows, and stochastic rewards have been studied. However, in the formulation of all these variants of the OP, each vertex is only allowed to be visited once by a single vehicle in a single tour. TOCPUR, by allowing multiple visit to a single vertex for a single agent, has much broader applications. Gunawan et al.~\cite{gunawan2016orienteering} provide a comprehensive overview of existing variants of the OP.

\paragraph{Coverage Planning} Coverage planning (CP) is the task in robotics of determining a path to cover all points in an area while avoiding obstacles. Standard CP problems involve both high-level path planning and low-level motion planning. Typical methods will break the free space, i.e., the space free of obstacles, down into simple, non-overlapping regions called cells~\cite{oksanen2009coverage}. Then an exhaustive walk through the graph defined by the decomposed cells is found. TOPCUR is similar to CP problems in that it also attempts to maximize the coverage over all vertices through planned routes. But in contrast, TOPCUR does not require complete coverage of all vertices. In our formulation, TOPCUR primarily focuses on the high-level path planning.
\section{Method}
In this section, we first define the TOCPUR problem using graph terminology. Then we propose to optimize the per-iteration objective while updating the reward estimation simultaneously. The per-iteration objective is then specified by a novel mixed integer programming formulation. In addition, we provide a simple greedy method as a baseline for solving the problem approximately.

\subsection{Notation}
We represent the area to be patrolled as a symmetric directed graph $G=(V, E)$, where $V=\{v_1, v_2, \dots, v_N\}$ is the set of $N$ vertices and $E$ is the edge set. $e_{ij}$ denotes an edge from $v_i$ to $v_j$, which has a length of $l_{ij}$.\footnote{Since $G$ is symmetric directed, if $e_{ij}$ exists, $e_{ji}$ also exists and $l_{ij}=l_{ji}$. The symmetric directed assumption represents bidirectional traffic.} In addition, we assume the agent fleet consists of $M$ agents in total, which we call agent $1$ to $M$ when the context is clear. We use $[x]$ to denote $\{1,2,\dots,x\}$, and $[a, b]$ to denote $\{a, a+1, \dots, b\}$.

\subsection{Problem Formulation}
In this section, we define the TOCPUR problem. We consider $H$ iterations of route planning for a vehicle fleet of size $M$ over a symmetric directed graph $G=(V, E)$. Each vertex $v_i$ accumulates a cost of $\kappa_{i,t}$ right before the $t$-th iteration, where $\mathbb{E}[\kappa_{i,t}] = \mu_i^*$ and $\mu_i^*$ is unknown a priori. In other words, the cost at each vertex grows linearly in expectation at an unknown rate (In the garbage truck application, think of an iteration as a day of garbage collection and $\kappa_{i,t}$ as the amount of garbage that appears overnight at location $i$ prior to day $t$). Next, denote $T_{i,t}$ as the most recent iteration prior to $t$ when the vehicle fleet visited $v_i$. Then $v_i$ at iteration $t$ accumulates a total cost of $c_{i,t} = \sum_{k=T_{i,t}+1}^t \kappa_{i,k}$. The entire cost over $G$ at iteration $t$ is specified as $c(G, t) = \sum_{i\in[N]} c_{i,t}$. Intuitively, the total cost at the end of an iteration is the sum of the current accumulated costs of all the vertices \emph{not} visited during that iteration (the current cost of those visited is reset to 0).
The goal is then to plan $M$ routes $\{\tau_{m,t}\}_{m=1}^M$, each no longer than a maximum length $l_\text{max}$, for all vehicles at each iteration $t \in [H]$, such that the total cost on $G$ over the horizon is minimized. Specifically, each route $\tau_{m,t} = (v_1, \dots, v_1)$ is a sequence of vertices that starts and ends in $v_1$. Optionally, we can include a set of ``must visit" vertices $I$ such that the fleet has to visit all vertices in $I$ during each iteration. The whole problem can be described as the following optimization:
\begin{equation}
\label{eq:obj}
\begin{split}
    \minimize_{\{\{\tau_{m,t}\}_{m=1}^M\}_{t=1}^H}&~~~ \mathcal{L} = \sum_{t=1}^H c(G, t),~\text{where} \\
    &~~~ c(G, t) = \sum_{i=1}^N \sum_{k=T_{i,t}+1}^t \kappa_{i,k}.\\
    \st & ~~~\forall m,t~~\sum_{e_{ij} \in \tau_{m,t}} l_{ij} \leq l_\text{max},\\
        & ~~~\forall i,t,~~T_{i,t+1} = \begin{cases}
                                         t           & \text{if}~v_i \in \bigcup_m \tau_{m,t}, \\
                                         T_{i,t}   & \text{otherwise}
                                     \end{cases}\\
          &~~~\forall i,~~T_{i, 1} = 0,\\
    \optn & ~~~ \forall v_i \in I,~~v_i \in \bigcup_m \tau_{m,t}.
\end{split}
\end{equation}
Here, $\bigcup_m \tau_{m,t} = \bigcup_m \{ v~|~v \in \tau_{m,t}\}$ and we abuse the notation a bit to denote $e_{ij} \in \tau$ if $v_i, v_j$ appear consecutively in $\tau$. This optimization problem (Opt.~\ref{eq:obj}) is NP-hard since it can easily reduce to the Travelling Salesman Problem. We emphasize two points: 1) As $\kappa_{i,t}$ is drawn randomly from a distribution with an unknown mean $\mu_i^*$, it is in general impossible to have an optimal open-loop plans for solving Opt.~\ref{eq:obj}. 2) Unlike in TOP, we do not assume that each $v_i$ can only appear once in any $\tau_{m,t}$. However, as we will see in the following, we can safely assume each $e_{ij}$ appears at most once in any $\tau_{m,t}$.

\subsection{Reward Estimation and Per-Iteration Planning}
Solving Opt.~\ref{eq:obj} exactly for large $H$ is computationally intractable. In fact, even when $H=1$, the problem remains NP-hard. In addition, as the true parameters $\{\mu_i^*\}_{i=1}^N$ are hidden, an optimal solution must be closed-loop plans that take past observations (e.g. $\kappa$) into consideration. Therefore, we propose to optimize the cost $c(G, t)$ iteratively while updating the estimates of $\{\mu_i^*\}_{i=1}^M$ simultaneously. Specifically, we keep track of $T_{i,t}$ for each node $v_i$, as well as the total observed cumulative cost at $v_i$, i.e. $C_{i,t} = \sum_{k=1}^{T_{i,t}} \kappa_{i,k}$. The maximum likelihood estimation of $\mu_i^*$ at time $t$ is therefore
\begin{equation}
    \hat{\mu}_{i,t} = \begin{cases}
        \frac{C_{i,t}}{T_{i,t}} = \frac{\sum_{k=1}^{T_{i,t}} \kappa_{i,k}}{T_{i,t}} & T_{i,t} > 0\\
        \mu_\text{default} & T_{i,t} = 0,\\
    \end{cases}
    \label{eq:mu_update}
\end{equation}
where $\mu_\text{default}$ is a default value or a pre-specified value based on prior knowledge when there has been no visits to a node.
Therefore, at each iteration $t$, we can use $\{\hat{\mu}_{i,t}\}_{i=1}^N$ to approximately estimate $c(G, t)$ as $\hat{c}(G,t) = \sum_{i=1}^N \hat{\mu}_{i,t}(t - T_{i,t})$. Following the convention in orienteering problems~\cite{chao1996team}, we formulate the per-iteration optimization as a mixed integer program (MIP). In the following, we will temporarily ignore the subscript $t$.
Let $\{x_{ijm}\}$, $\{y_{im}\}$ and $\{z_i\}$ be the binary decision variables and $i,j \in [N]$ and $m\in[M]$. $x_{ijm}=1$ if agent $m$ traverses $e_{ij}$ and $0$ otherwise. Similarly, $y_{im}=1$ if agent $m$ visits $v_i$ and $z_i=1$ if any agent visits $v_i$. It is sufficient to assume $x_{ijm}$ is binary, as justified by the following proposition.
\begin{proposition}
Any optimal solution of Opt.~\ref{eq:obj} has an equivalent solution where each $e_{ij}$ is traversed at most once.
\end{proposition}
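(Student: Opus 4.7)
The plan is to show that if any agent's route $\tau_{m,t}$ uses a directed edge $e_{ij}$ more than once, we can locally modify $\tau_{m,t}$ into a strictly shorter route that visits exactly the same set of vertices. Since the objective of Opt.~\ref{eq:obj} depends on the routes only through which vertices are visited in each iteration (via the update rule for $T_{i,t}$), such a modification preserves the objective value while tightening budget slack, and iterating it yields an equivalent solution in which every directed edge is traversed at most once.

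The key tool is the symmetric directed assumption on $G$: for any walk $W$ from $u$ to $w$ in $G$, there exists a reverse walk $\text{rev}(W)$ from $w$ to $u$ of equal length, obtained by replacing each directed edge $e_{ab}$ along $W$ by $e_{ba}$ (which exists and satisfies $l_{ba}=l_{ab}$). I would then decompose the offending route as $\tau_{m,t} = W_1 \cdot e_{ij} \cdot W_2 \cdot e_{ij} \cdot W_3$, where $W_1$ goes from $v_1$ to $v_i$, $W_2$ from $v_j$ to $v_i$, and $W_3$ from $v_j$ back to $v_1$, and replace it with $\tau' = W_1 \cdot \text{rev}(W_2) \cdot W_3$. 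The composition is a legitimate closed walk at $v_1$, has length exactly $|\tau_{m,t}| - 2 l_{ij}$, and visits vertex set $V(W_1) \cup V(W_2) \cup V(W_3) = V(\tau_{m,t})$.

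To finish, I would observe that each substitution strictly decreases the (nonnegative) total edge-count of the route, so the procedure terminates after finitely many applications, yielding a route in which no directed edge is traversed more than once. Repeating this across all agents and iterations gives the desired equivalent solution. Since the visited vertex set, and hence $\{T_{i,t}\}$ and $\mathcal{L}$, is preserved at each step, the final solution achieves the same objective value, and since $|\tau'| \leq |\tau_{m,t}| \leq l_\text{max}$, feasibility of the budget constraint is preserved.

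The main obstacle I expect is careful bookkeeping: one must verify that $V(\tau') = V(\tau_{m,t})$ (nothing is ``lost'' by cutting out the second $e_{ij}$ traversal), and invoke the symmetry of $G$ at the right moment to ensure $\text{rev}(W_2)$ is a valid walk with $|\text{rev}(W_2)| = |W_2|$. Neither is deep, but these are precisely the spots where the argument would silently fail if $G$ were directed but asymmetric, so the proof needs to flag them.
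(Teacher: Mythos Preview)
Your proposal is correct and follows essentially the same idea as the paper: isolate the sub-walk $W_2$ between two consecutive traversals of $e_{ij}$ and replace the pattern $e_{ij}\cdot W_2\cdot e_{ij}$ by $\mathrm{rev}(W_2)$, using the symmetric-directed assumption. Your write-up is in fact cleaner than the paper's---you give an explicit decomposition, verify $V(\tau')=V(\tau_{m,t})$, and add a termination argument via strictly decreasing edge-count, whereas the paper's proof is a two-line sketch (with a somewhat vacuous first case) appealing to a figure.
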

\begin{proof}
 Assume otherwise. Let $e_{ij}$ be the edge that is visited at least twice. If $v_i$ is only visited once, then certainly we can remove one traversal of $e_{ij}$ with no problems. If $v_i$ is visited more than once, then there has to exist a route $p$, i.e. a sequence of vertices, that goes from $v_j$ back to $v_i$. But then, since $G$ is symmetric directed, we can remove both traversals of $e_{ij}$ by reversing the traversal direction of the edges along the route $p$. This change results in visiting the same set of vertices with a shorter length (See Fig.~\ref{fig:edge}).
\end{proof}

\begin{figure}[t]
    \centering
    \includegraphics[width=0.9\columnwidth]{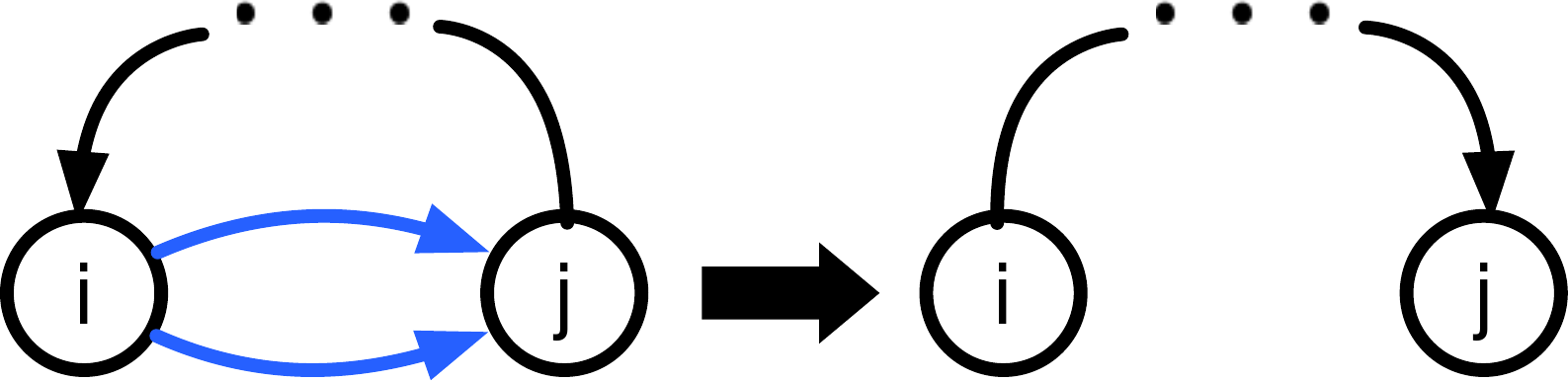}
    \caption{If $e_{ij}$ is visited twice (blue), we can remove them by reversing the direction of another route connecting $v_j$ to $v_i$.}
    \label{fig:edge}
    \vspace{-15pt}
\end{figure}

The per-iteration optimization is then formulated as the following MIP, which we call the team orienteering coverage problem  (TOCP).\footnote{Without futher specification, we assume $m\in[M],i,j\in[N]$.}
\begin{align}
    &                &\maximize~\sum_{m=1}^M \sum_{i=2}^{N} \hat{c}_{i,t} z_{i} \label{opt:0}\\
    &\forall m,      &\sum_{j=2}^N x_{1jm} = \sum_{i=2}^{N} x_{i1m} = 1 \label{opt:1}\\
    &\forall m,i,    &x_{iim}=0 \label{opt:2}
\end{align}
\begin{align}
    &\forall m,i,    &y_{im} \leq \sum_{j=1}^N x_{ijm} \leq \frac{y_{im}\cdot l_\text{max}}{\min_{e_{ij} \in E} l_{ij}} \label{opt:3}\\
    &\forall i,      &z_i \leq \sum_{m} y_{im} \leq z_i \cdot M \label{opt:4}\\
    &\forall m,i,    &\sum_{j=1}^N x_{ijm} = \sum_{j=1}^N x_{jim} \label{opt:5}\\
    &\forall m,      &\sum_{i=1}^N\sum_{j=1}^N l_{ij}x_{ijm} \leq l_\text{max} \label{opt:6}\\
    &\forall m,      &\sum_{j=1}^N u_{1jm} - \sum_{i=1}^N u_{i1m} = \sum_{j=2}^N y_{jm} \label{opt:7}\\
    &\forall m,i>1,  &\sum_{j=1}^N u_{ijm} - \sum_{j=1}^N u_{jim} = y_{im} \label{opt:8}\\
    &\forall m,i,j,  &0 \leq u_{ijm} \leq N \cdot x_{ijm} \label{opt:9}
\end{align}
\begin{equation}
    \optn \qquad \forall v_i \in I,~~z_i = 1 \label{opt:10}
\end{equation}

Eq.~\ref{opt:0} is the objective.
Eq.~\ref{opt:1} ensures all agents start and end in $v_1$.
Eq.~\ref{opt:2} eliminates self-loops.
Eq.~\ref{opt:3}-\ref{opt:4} enforce the definitions of $y_{im}$ and $z_i$. For instance, $y_{im}=0$ if and only if $\forall j,~x_{ijm}=0$.\footnote{The ideal constraint is $y_{im} = \mathds{1}(\sum_j x_{ijm} > 1)$. But due to the need for
constraints to be linear, we represent the constraint with two
inequalities.}
Eq.~\ref{opt:5} ensures the conservation of flow.
Eq.~\ref{opt:6} ensures each agent travels within the length budget $l_\text{max}$.
Eq.~\ref{opt:7}-\ref{opt:9} ensures the found route $\tau_m$ for each agent $m$ is strongly connected. The variables $\{u_{ijm}\}$ define an amount of flow that begins at $v_1$ and is reduced by 1 at every node $m$ visits in sequence. For instance, the net outflow at $v_1$ for $m$ should be $\sum_{j\in[2,N]} y_{jm}$ (Eq.~\ref{opt:7}), the number of vertices $m$ visits in $\tau_m$. If $m$'s trajectory were to include two disconnected components, then there would be no way to consume all of the flow (e.g., Eq.~\ref{opt:7}-\ref{opt:9} would be violated). Note that $u_{ijm}$ should only be positive if $m$ traverses $e_{ij}$, which is ensured by Eq.~\ref{opt:9}.
Finally, Eq.~\ref{opt:10} optionally ensures that all ``must visit" vertices in $I$ are visited.
In the above MIP,~Eq.~\ref{opt:3}-\ref{opt:4} and Eq.~\ref{opt:7}-\ref{opt:9} are \emph{novel} constraints designed specifically for TOCP. The whole algorithm for solving TOCPUR is summarized in Alg.~\ref{alg:psp}.\\\\
\begin{algorithm}[h]
\caption{Reward Estimation and Per-Iteration Planning}
\label{alg:psp}
\begin{algorithmic}[1]
  \STATE \textbf{Maintain:} for each $v_i$, we maintain $T_{i,t}$, the most recent iteration when $v_i$ was visited ($T_{i,1}=0$), and $C_{i,t}$, the observed cumulative cost at $v_i$ up to time $T_{i,t}$.
  \STATE \textbf{Input:} the graph $G=(V, E)$, the ``must visit" vertices $I$, the maximum traversal budget $l_\text{max}$.
  \FOR{$t = 1$ to $H$}
        \STATE $\forall i$,~~update $\hat{\mu}_{i,t}$ according to Eq.~\ref{eq:mu_update}.
        \STATE $\forall i$,~~$\hat{c}_{i,t} = \hat{\mu}_{i,t} \cdot (t - T_{i,t})$.
        \STATE Plan $\{\tau_{m,t}\}_{m=1}^M$ by solving Opt.~\ref{opt:0}-\ref{opt:10}.
        \STATE $\forall i\in[N],~~T_{i,t+1} = 
                \begin{cases}
                    t & \text{if}~ v_i \in \bigcup_m \tau_{m,t}, \\
                    T_{i,t} & \text{otherwise.} \\
                \end{cases}$.
        \STATE $C_{i,t} = \sum_{k=1}^{T_{i,t}} \kappa_{i,k}$.
  \ENDFOR
\end{algorithmic}
\end{algorithm}
\textbf{Remark:} The sub-optimality of Alg.~\ref{alg:psp} originates from two sources: 1) the decomposition sub-optimality caused by decomposing the long horizon planning into per-iteration planning; and 2) the uncertainty sub-optimality caused by inaccurate estimation of $\{\mu_i^*\}$, which decreases as we visit each vertex more often. It is not immediately clear whether the decomposition sub-optimality can ever arise. To illustrate that it can arise, we provide an example such that even when the true $\{\mu_i^*\}$ are provided, the per-iteration optimal plans are still not optimal over the horizon. The example is illustrated in Fig.~\ref{fig:counter-example} with $H=2$. In Fig.~\ref{fig:counter-example}, the leftmost graph shows the initial costs on all vertices. For illustration simplicity, we assume there is no growing cost at this moment (so each vertex only has accumulate an initial cost). Due to the travel length budget $l_\text{max} = 5$, the per-iteration optimal plans achieve a total cost of $12$, while the optimal plans achieve a cost of $9$ over two steps.
\begin{figure*}[h]
    \centering
    \includegraphics[width=\textwidth]{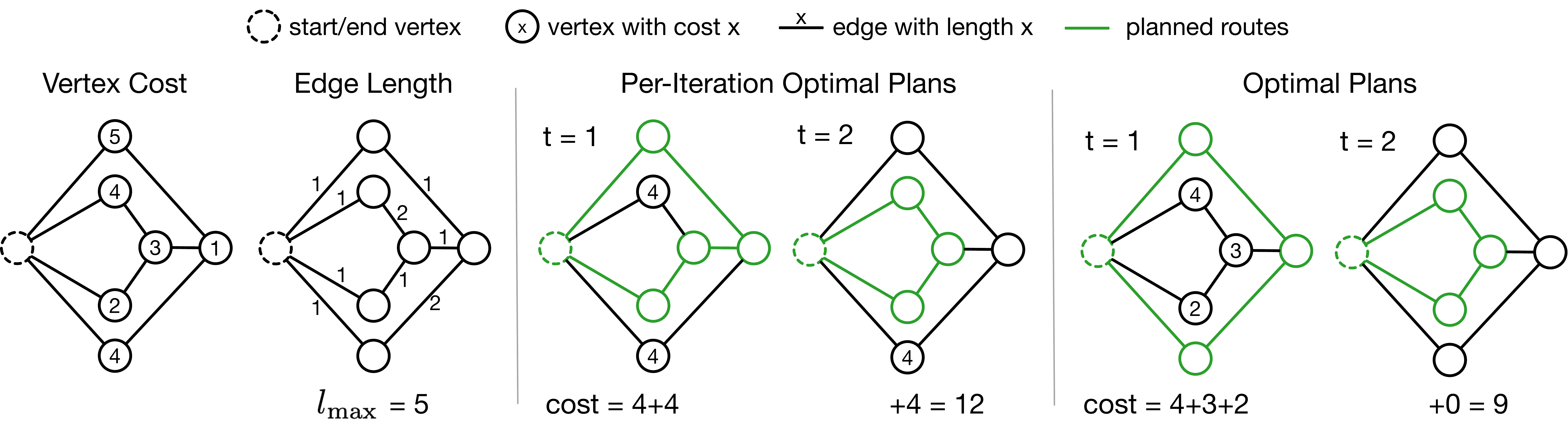}
    \caption{An example problem with $H=2$ and $M=1$, where the per-iteration plans are not optimal. \textbf{left:} the graph $G$. The initial vertex cost is specified on the vertex and the edge length is labeled on each edge. Note that all edges are bi-directional and we assume $l_\text{max} = 5$. \textbf{middle:} the optimal per-iteration plans visits $4$ vertices at $t=1$, and only one of the two un-visited vertices at $t=2$. \textbf{right:} the optimal plans visit 3 nodes at each iteration, covering all nodes in the end.}
    \label{fig:counter-example}
\end{figure*}
\begin{algorithm}[t]
\caption{Greedy Per-Iteration Planning}
\label{alg:greedy}
\begin{algorithmic}[1]
  \STATE \textbf{Input:} the graph $G=(V, E)$, the estimated cost growth for all vertices $\{\hat{\mu}_{i,t}\}_{i=1}^N$, the time since last visit to each node $\{T_{i,t}\}_{i=1}^N$, the ``must visit" vertices $I$, the maximum traversal budget $l_\text{max}$, $D: V\times V \rightarrow \mathbb{R}$, a function that outputs the shortest distance between any pair of vertices (e.g., calculated by Floyd-Warshall).
  \STATE $\forall i \in [N]$,~compute $\hat{c}_{i,t} = \hat{\mu}_{i,t} \cdot (t - T_{i,t})$.
  \STATE $A \leftarrow I$ and $B \leftarrow V\setminus I$. (must/optionally visit vertices)
  \STATE $\forall m\in[M],~\tau_{m,t} = [~],~f_m=0$~(whether $m$ finishes), $v_m = v_1$~(current location of $m$),~$l_m = 0$~(travelled distance of $m$).
  \WHILE{$\sum_m f_m < M$}
    \FOR{agent $m \in \{x \mid f_x = 0, x \in [M]\}$}
        \STATE $Z \leftarrow \{v~|~l_m + D(v_m, v)+D(v, v_1) \leq l_\text{max}\}$.
        \IF{$A \cap Z \neq \emptyset$}
            \STATE $v \leftarrow \argmin_{x \in A\cap Z} D(v_m, v)$;~$X \leftarrow A$.
        \ELSIF{$B\cap Z \neq \emptyset$}
            \STATE $v \leftarrow \argmax_{v \in B\cap Z} \hat{c}_{i,t} / D(v_m, v)$;~$X \leftarrow B$.
        \ELSE
            \STATE $v = v_1$;~$f_m = 1$;~$X \leftarrow \emptyset$. 
        \ENDIF
        \STATE $l_m \leftarrow l_m + D(v_m, v)$.
        \STATE Let $p$ be the shortest path from $v_m$ to $v$.
        \STATE $X \leftarrow X\setminus p$;~$\tau_{m,t}$ append $p$;~$v_m \leftarrow v$.
    \ENDFOR
  \ENDWHILE
\end{algorithmic}
\end{algorithm}
\vspace{-10pt}
\subsection{Greedy Per Step Planning}
In addition to the exact MIP solution from Opt.~\ref{opt:0}-\ref{opt:10}, we provide a greedy algorithm that efficiently and approximately solves Opt.~\ref{eq:obj}, summarized in Alg.~\ref{alg:greedy}. The principle idea is to prioritize must visit vertices first. When all ``must visit" vertices have been visited, we prioritize vertices with a larger ratio between $\hat{c}_{i,t}$ and the distance to $v_i$.

\section{Experiments}
In this section, we conduct simulated experiments to evaluate two hypotheses: 1) the proposed iterative method outperforms an exact TOP method, and 2) solving the MIP in Opt.~\ref{opt:0}-\ref{opt:10} exactly outperforms the simple greedy algorithm.
\subsection{Simulated Experiments}
To compare the proposed method with the exact TOP method and the greedy method, we introduce a new benchmark that consists of 600 randomly generated graphs. For each graph, the number of vertices $N$ is drawn uniformly at random from $\{10, 12, 14, 16, 18, 20\}$. The vertex $v_1$ is positioned at $(0,0)$ in the standard euclidean plain. For vertices $v_2, \dots, v_N$, each vertex's position is drawn uniformly at random from a $10\times10$ square centered at the origin, with its sides parallel to the x- and y-axes. Then each vertex $v_i$ is connected to its closest $n_i$ vertices, where $n_i$ is drawn uniformly at random from $\{3, 4, 5\}$. $l_\text{max}$ is drawn from $U(20, 20+2N)$, where $U(a,b)$ denotes a uniform distribution from $a$ to $b$. The number of agents $M$ is drawn uniformly at random from $\{2,3,4,5\}$. The number of ``must visit" vertices, $N_I$, is set to $\min(X, M)$, where $X$ is drawn uniformly at random from $\{1,2,3\}$. Denote $V_\text{reachable}$ as all vertices reachable from $v_1$ within the travel budget $l_\text{max}$; we sample $N_I$ vertices from $V_\text{reachable}$ without replacement to form $I$. Finally, for each $v_i$, the expected growth of cost $\mu_i^*$ is drawn from $U(0.1, 0.9)$. For each iteration $t$, the actual growth $\kappa_{i,t} = \min(\max(x_{i,t},~0),~1)$, where $x_{i,t} \sim \mathcal{N}(\mu_i^*, 0.1)$. Finally, we generate 120 random graphs with random seeds ranging from $[1,120]$, following the above procedure for each horizon $H$ in $\{2, 4, 6, 8, 10\}$, which results in a total of 600 random graphs.
\begin{figure}[ht]
    \centering
    \includegraphics[width=\columnwidth]{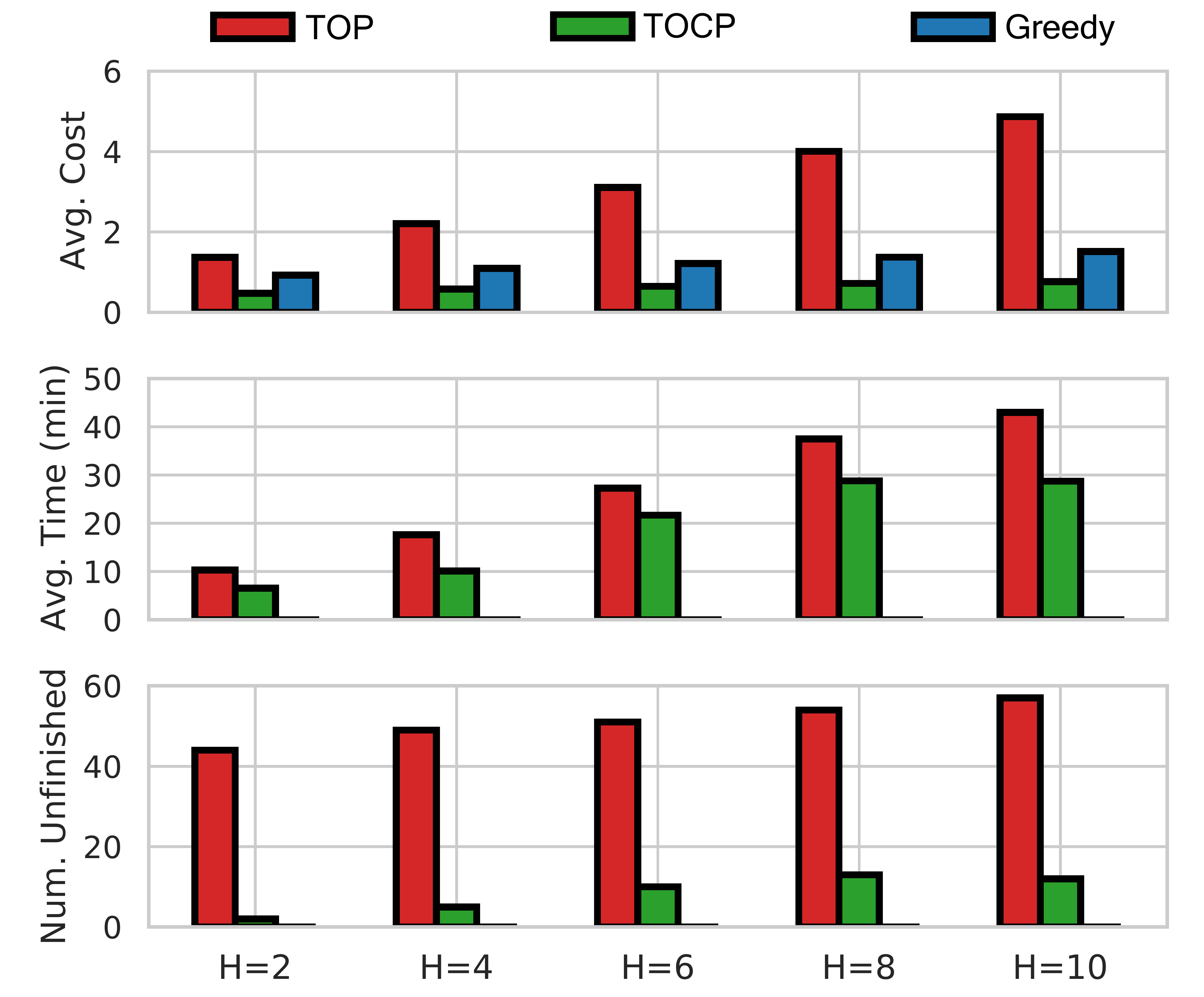}
    \caption{Comparison among TOCP, TOP, and the greedy algorithm on 600 randomly generated graphs.}
    \label{fig:simulated}
    \vspace{-15pt}
\end{figure}

We summarize the experiment results in Fig.~\ref{fig:simulated}. In all cases, we follow Alg.~\ref{alg:psp}, and varying the per-iteration method (line 6 of Alg.~\ref{alg:psp}). We compare TOCP solutions with exact TOP solutions and the solutions found by Alg.~\ref{alg:greedy}. We limit the per-iteration computation time of TOP and TOCP to 1000 seconds. Therefore, occasionally TOP and TOCP might not find a feasible solution. The first row of Fig.~\ref{fig:simulated} shows the total cumulative cost for each horizon $H$, averaged over the subset of 120 random graphs where all methods find a solution. The total cumulative cost is essentially the objective in Eq.~\ref{eq:obj}. TOCP solutions outperform the ones found by the greedy method, and both outperform TOP solutions by a large margin. We provide additional pairwise independent-samples T-tests in Table~\ref{tab:t-test} and highlight the conclusions that are statistically significant ($p \leq 0.05$).
\begin{table}[ht]
\centering{}%
\begin{tabular}{l|cc|cc}
         & \multicolumn{2}{c}{TOP-TOCP} & \multicolumn{2}{c}{Greedy-TOCP} \\
 Horizon & t-score & p-value & t-score & p-value\\
\hline
$H=2$  & 3.63 & \textbf{.0004}  & 2.38 & \textbf{.018} \\
$H=4$  & 4.00 & \textbf{.0001}  & 2.04 & \textbf{.043} \\
$H=6$  & 4.36 & \textbf{.00004} & 1.97 & \textbf{.050} \\
$H=8$  & 4.46 & \textbf{.00003} & 1.95 & .054 \\
$H=10$ & 4.48 & \textbf{.00003} & 1.96 & .052 \\
\hline
\end{tabular}
\caption{Independent-samples T-tests for TOP v.s. TOCP and the greedy method v.s. TOCP.}
\label{tab:t-test}
\vspace{-10pt}
\end{table}
\begin{figure*}[t]
    \centering
    \includegraphics[width=0.95\textwidth]{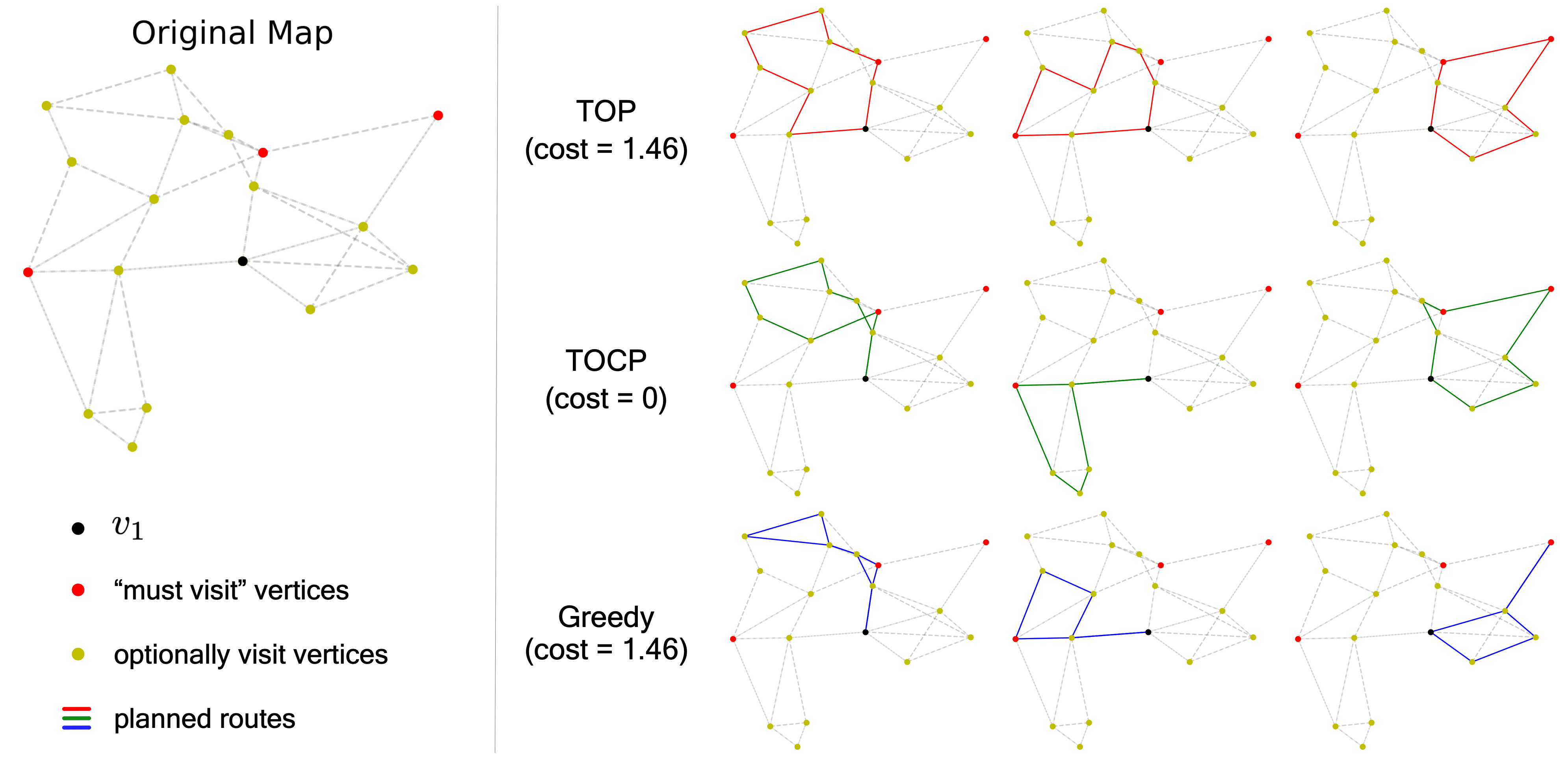}
    \caption{Solutions from TOP, TOCP and the greedy method on an example graph with $H=1$ and $M=3$.}
    \label{fig:visualization}
    \vspace{-10pt}
\end{figure*}

The second row in Fig.~\ref{fig:simulated} reports the average computation time, again over the subset of graphs where all methods find solutions. As expected, TOP and TOCP take much longer than the greedy method does. The last row of Fig.~\ref{fig:simulated} reports the number of graphs for which each method fails to find a solution. Since TOP does not allow a second visit to any vertex, TOP fails more often than TOCP.

In addition to the quantitative evaluations in Fig.~\ref{fig:simulated}. We also provide a qualitative visualization with $H=1$ and $M=3$ in Fig.~\ref{fig:visualization} to further showcase the difference in the three methods used. Among all three methods, TOCP is the only one that cover all vertices, thus clearing all the costs.

\subsection{Physical Demo}
We additionally record a demo of applying Alg.~\ref{alg:psp} on three physical robots with $H=1$ in a real-world environment.\footnote{The video link is at~\url{https://drive.google.com/file/d/1pwE-zLbpcYK2DGeWZ2L5ePZsuO78KCpc/view?usp=sharing}.}
\section{Conclusion}
In this work, we formulate a novel variant of the team orienteering problem (TOP) that allows multiple visits to the same vertex and uncertain cumulative costs on each vertex over a horizon. We propose a method to iteratively find the per-iteration optimal plans using a novel mixed integer programming formulation based on the maximum likelihood estimates of each vertex's costs. The simulated experiments show that the proposed method greatly outperforms the exact TOP solution. We also provide a real-world demo of the proposed method on three physical robots. In this paper, we focus on high-level route planning. An interesitng direction for future work is to incorporate obstacle avoidance into TOCPUR.

\section*{ACKNOWLEDGMENT}
This work has taken place in the Learning Agents Research Group (LARG) at UT Austin.  LARG research is supported in part by NSF (CPS-1739964, IIS-1724157, NRI-1925082), ONR (N00014-18-2243), FLI (RFP2-000), ARO (W911NF-19-2-0333), DARPA, Lockheed Martin, GM, and Bosch.  Peter Stone serves as the Executive Director of Sony AI America and receives financial compensation for this work. The terms of this arrangement have been reviewed and approved by the University of Texas at Austin in accordance with its policy on objectivity in research. We thank Harel Yedidsion and Yuqian Jiang for their thoughtful discussion on designing the greedy algorithm.

\bibliographystyle{IEEEtran}
\bibliography{IEEEabrv,references}

\begin{thebibliography}{1}
\providecommand{\url}[1]{#1}
\csname url@samestyle\endcsname
\providecommand{\newblock}{\relax}
\providecommand{\bibinfo}[2]{#2}
\providecommand{\BIBentrySTDinterwordspacing}{\spaceskip=0pt\relax}
\providecommand{\BIBentryALTinterwordstretchfactor}{4}
\providecommand{\BIBentryALTinterwordspacing}{\spaceskip=\fontdimen2\font plus
\BIBentryALTinterwordstretchfactor\fontdimen3\font minus
  \fontdimen4\font\relax}
\providecommand{\BIBforeignlanguage}[2]{{%
\expandafter\ifx\csname l@#1\endcsname\relax
\typeout{** WARNING: IEEEtran.bst: No hyphenation pattern has been}%
\typeout{** loaded for the language `#1'. Using the pattern for}%
\typeout{** the default language instead.}%
\else
\language=\csname l@#1\endcsname
\fi
#2}}
\providecommand{\BIBdecl}{\relax}
\BIBdecl

\bibitem{sun2019design}
Y.~Sun, L.~Guan, Z.~Chang, C.~Li, and Y.~Gao, ``Design of a low-cost indoor
  navigation system for food delivery robot based on multi-sensor information
  fusion,'' \emph{Sensors}, vol.~19, no.~22, p. 4980, 2019.

\bibitem{chao1996team}
I.-M. Chao, B.~L. Golden, and E.~A. Wasil, ``The team orienteering problem,''
  \emph{European journal of operational research}, vol.~88, no.~3, pp.
  464--474, 1996.

\bibitem{golden1987orienteering}
B.~L. Golden, L.~Levy, and R.~Vohra, ``The orienteering problem,'' \emph{Naval
  Research Logistics (NRL)}, vol.~34, no.~3, pp. 307--318, 1987.

\bibitem{ilhan2008orienteering}
T.~Ilhan, S.~M. Iravani, and M.~S. Daskin, ``The orienteering problem with
  stochastic profits,'' \emph{Iie Transactions}, vol.~40, no.~4, pp. 406--421,
  2008.

\bibitem{labadie2012team}
N.~Labadie, R.~Mansini, J.~Melechovsk{\`y}, and R.~W. Calvo, ``The team
  orienteering problem with time windows: An lp-based granular variable
  neighborhood search,'' \emph{European Journal of Operational Research}, vol.
  220, no.~1, pp. 15--27, 2012.

\bibitem{chen2014multi}
C.~Chen, S.-F. Cheng, and H.~C. Lau, ``Multi-agent orienteering problem with
  time-dependent capacity constraints,'' \emph{Web Intelligence and Agent
  Systems: An International Journal}, vol.~12, no.~4, pp. 347--358, 2014.

\bibitem{hanafi2020multi}
S.~Hanafi, R.~Mansini, and R.~Zanotti, ``The multi-visit team orienteering
  problem with precedence constraints,'' \emph{European journal of operational
  research}, vol. 282, no.~2, pp. 515--529, 2020.

\bibitem{gunawan2016orienteering}
A.~Gunawan, H.~C. Lau, and P.~Vansteenwegen, ``Orienteering problem: A survey
  of recent variants, solution approaches and applications,'' \emph{European
  Journal of Operational Research}, vol. 255, no.~2, pp. 315--332, 2016.

\bibitem{oksanen2009coverage}
T.~Oksanen and A.~Visala, ``Coverage path planning algorithms for agricultural
  field machines,'' \emph{Journal of field robotics}, vol.~26, no.~8, pp.
  651--668, 2009.

\end{thebibliography}

\end{document}